\documentclass[conference]{IEEEtran}
\IEEEoverridecommandlockouts
\usepackage{cite}
\usepackage{amsmath,amssymb,amsfonts}
\usepackage{amsthm}
\newtheorem{proposition}{Proposition}
\usepackage{algorithmic}
\usepackage{graphicx}
\usepackage{textcomp}
\usepackage{xcolor}
\usepackage{booktabs}
\usepackage{url}
\usepackage{tikz}
\usetikzlibrary{positioning,arrows.meta,calc}

\newcommand{\modelname}{Monotone FedNAM}

\begin{document}

\title{Physically Constrained Federated Additive Models for O-RAN SLA-Risk Prediction}

\author{
    \IEEEauthorblockN{
        Aubida A. Al-Hameed\IEEEauthorrefmark{2},
        Mohammed M. H. Qazzaz\IEEEauthorrefmark{1}\IEEEauthorrefmark{2},
        Maryam Hafeez\IEEEauthorrefmark{1}, and
        Syed A. Zaidi\IEEEauthorrefmark{1}
    }
    \IEEEauthorblockA{
        \IEEEauthorrefmark{1}School of Electronic and Electrical Engineering, University of Leeds, Leeds, UK \\
        Email: \{ml14mmh, m.hafeez, s.a.zaidi\}@leeds.ac.uk
    }
    \IEEEauthorblockA{
        \IEEEauthorrefmark{2}College of Electronics Engineering, Ninevah University, Mosul, Iraq \\
        Email: aubida.alhameed@uoninevah.edu.iq
    }
}
\maketitle

\begin{abstract}
Proactive service assurance in O-RAN requires predicting per-slice SLA
violations before they occur. The prediction model must be auditable by
operators and must train across base stations without pooling per-slice
KPIs, which are commercially sensitive because slices are leased to
individual tenants. Neural additive models (NAMs) offer auditability
because each KPI contributes through a visible shape function. However,
visibility alone does not guarantee physical validity. On the ColO-RAN
testbed dataset, unconstrained NAMs learn effects that contradict wireless
physics, for example predicting higher risk when channel quality improves.
This failure appears under both local and centralized training, and
non-IID federated averaging worsens it. We present \modelname{}, a
federated additive model in which KPIs with unambiguous physical direction
are represented as monotone splines whose constraints survive FedAvg
aggregation by construction, while contestable KPIs remain unconstrained.
The model trains and operates as a Non-RT RIC rApp and is compact enough
for deployment as a Near-RT RIC xApp. \modelname{} eliminates all
monotonicity violations, raises constrained shape consistency from 0.71 to
1.00, generalizes to an unseen scheduling policy, and reduces uplink
traffic by 65\%, at a cost of 0.04 to 0.07 AUC. These results show that
physically constrained federated additive models can support auditable
SLA risk inference for multi-tenant O-RAN service assurance.
\end{abstract}

\begin{IEEEkeywords}
O-RAN, federated learning, neural additive models, SLA
\end{IEEEkeywords}


\section{Introduction and Related Work}
\label{sec:intro}
\label{sec:related}
The Open Radio Access Network (O-RAN) turns the RAN into a software-defined, data-driven system controlled through RAN Intelligent Controllers (RICs)~\cite{polese2023understanding}. The non-real-time RIC (Non-RT RIC) hosts rApps for management and model training, and the near-real-time RIC (Near-RT RIC) hosts xApps for online inference and control~\cite{polese2023understanding}. This architecture makes proactive service assurance feasible. Rather than reacting to a breached service-level agreement (SLA), an operator can predict per-slice SLA violations a short horizon ahead and act before they occur~\cite{foukas2023ran}. SLA-violation prediction in O-RAN is intrinsically distributed, and the trust boundaries run inside a single network. Network slices are leased to enterprises and mobile virtual network operators~\cite{samdanis2016slicebroker,gsma2025ng116}, so per-slice key performance indicators (KPIs) expose a tenant's traffic patterns, load, and delivered service quality. These records are commercially sensitive toward other tenants, and the analytics rApp that consumes them may itself come from a third-party vendor. Shared and neutral-host deployments add a second boundary, since several operators can serve traffic through the same base stations. Pooling raw per-slice traces at one point is therefore often contractually or legally blocked, and centralizing high-rate KPI streams adds transport overhead as a secondary cost. Federated learning (FL) fits this setting. Each base station acts as a federated client and trains on its local per-slice records, while only model updates leave the site~\cite{mcmahan2017fedavg,kairouz2021advances}. FL has been applied in O-RAN to slicing, traffic steering, and anomaly detection~\cite{zhang2022federated,erdol2022federated,attanayaka2023p2p}, yet the resulting models remain hard to inspect. Accuracy alone is not sufficient for an ML-based xApp executing in the RIC. Operators must understand and audit the underlying model before its predictions drive a control loop, yet the prevailing solutions, such as deep reinforcement learning for slice resource allocation~\cite{raftopoulos2024latency} and recurrent demand forecasting~\cite{foukas2023ran}, are opaque~\cite{brik2024xai,khan2024explainable}. Most explainability work for the RAN attaches post-hoc explanations to black-box models~\cite{fiandrino2023explora,rezazadeh2023explanation}; such explanations can be unfaithful and give no guarantee that learned effects obey wireless physics and are computationally prohibitive at Near-RT RIC inference cadences. An attractive alternative is an intrinsically interpretable predictor. Neural Additive Models (NAMs)~\cite{agarwal2021nam,hastie1986gam} learn one shape function per feature, so the effect of each KPI on the prediction is visible by construction. We find that this visibility is not enough. On measured O-RAN traces, an unconstrained additive model learns shape functions that are often physically wrong. It can learn that better channel quality raises SLA-violation risk, which inverts basic wireless behavior. The cause is the data, not the model class. RAN KPIs are strongly correlated, since backlog, granted resources, delivered rate, and channel quality move together, so a flexible additive fit can attribute a confounded effect to the wrong feature's shape. The problem appears even under centralized training, and non-independent and identically distributed (non-IID) federated averaging compounds it by merging heterogeneous per-client shapes into a distorted global effect. Such violations undermine the audit either way. A blatant inversion is caught in review, but it destroys operator trust in the model. A partial violation over part of a shape function is easy to miss and can silently mislead. The natural remedy is to enforce the known physical directions as hard constraints~\cite{sill1997monotonic,liu2020certified,runje2023monotonic} rather than hope they emerge from the data. In a federated system this is not automatic, because a constraint satisfied by every client must also survive aggregation at the server. Providing exactly this guarantee is the core of our approach.
 
We address this with \modelname{}, a federated neural additive model for trustworthy SLA-violation risk prediction. Unlike prior federated NAMs~\cite{nanda2025fednam}, which leave feature effects unconstrained, \modelname{} preserves monotonicity under aggregation by construction. Each base station trains per-feature shape functions on its private KPIs, and the server aggregates them without breaking the constraints. Constraints are applied only to KPIs whose physical direction is unambiguous, and contestable KPIs are left free. SLA labels follow per-slice service profiles rather than ad hoc thresholds. We evaluate \modelname{} on the ColO-RAN testbed dataset~\cite{coloran2022}, where seven base stations form a realistic non-IID federation.
The contributions are as follows.
(1) We show that interpretability does not imply physical validity. On measured O-RAN traces, unconstrained neural additive models contradict expected KPI-risk directions in every tested regime, and non-IID federated averaging increases the violation rate. This shows that monotonicity should be imposed as an explicit wireless-domain prior.
(2) We propose \modelname{}, a federated neural additive model that preserves monotonicity under aggregation by construction. It constrains only KPIs with unambiguous conditional direction, leaves contestable KPIs free by default, and retains per-feature interpretability.
(3) We define an O-RAN rApp-to-xApp deployment path for trustworthy SLA-risk inference. \modelname{} is trained and governed as a Non-RT RIC rApp, and the resulting global model can be deployed to a Near-RT RIC xApp for online per-slice SLA-risk prediction with auditable additive explanations.

\section{Problem Formulation and Method }
\label{sec:system}
\label{sec:method}

We predict the probability that a slice violates its SLA in the next window. The pipeline is direct: raw 4\,Hz per-slice KPMs are windowed over 2.5\,s, pooled across the slice's active users into a feature vector $\mathbf{x}$, and passed through the additive model and a sigmoid to yield the SLA-risk score. The current window supplies the features, and the following window supplies the label.

\subsection{Federated O-RAN Setting and Target}
We treat each base station as a federated client. Client $k$ owns a local dataset $\mathcal{D}_k$
\begin{equation}
\mathcal{D}_k = \{(\mathbf{x}_{k,n}, y_{k,n})\}_{n=1}^{N_k},
\label{eq:dataset}
\end{equation}
where $\mathbf{x}_{k,n}\in\mathbb{R}^{d}$ is a vector of per-slice RAN KPIs and $y_{k,n}\in\{0,1\}$ is the per-slice SLA-violation label for the next prediction window. The SLA-violation risk is $\Pr(y=1\mid\mathbf{x})$, the probability that a slice breaches its SLA in the next window given current KPIs. Clients share a common feature schema but differ in distribution, because one base station may face heavier load or worse channels than another. This is a horizontal federation: the server coordinates training but never receives raw client records. The confidentiality unit is the tenant slice: per-slice records must not leave the serving site in raw form, and the coordinating rApp receives only anchored model updates. Some KPIs carry a known physical direction. When the other KPIs are held fixed, higher backlog, demand, or error rate should not lower predicted risk, and better channel quality should not raise it. We encode this as a monotonic prior on those features.

\subsection{Additive Prediction Model}
A NAM decomposes the predictor into a sum of feature-specific functions~\cite{agarwal2021nam}:
\begin{equation}
f_\theta(\mathbf{x}) = \beta_0 + \sum_{j=1}^{d} g_j(x_j;\theta_j),
\label{eq:nam}
\end{equation}
where $g_j(\cdot)$ is a small subnetwork for feature $j$, $\theta_j$ are its parameters, and $\beta_0$ is an intercept. Each feature effect can be plotted as a one-dimensional shape function and checked against engineering expectations.

\subsection{Monotone Shape Functions}
Let $\mathcal{M}^{+}$ and $\mathcal{M}^{-}$ denote the features that should be non-decreasing and non-increasing with SLA-violation risk. \modelname{} enforces
\begin{equation}
\frac{\partial g_j(x_j)}{\partial x_j}\ge 0,\ j\in\mathcal{M}^{+},
\qquad
\frac{\partial g_j(x_j)}{\partial x_j}\le 0,\ j\in\mathcal{M}^{-}.
\end{equation}
Constrained features are represented as monotone linear splines on a shared grid of $Q=16$ knots. The knots are placed at marginal feature quantiles, agreed once before training from low-sensitivity per-feature order statistics rather than raw KPIs, and are kept identical across clients. Each spline stores raw parameters $\theta$, and its increments are $\delta=\mathrm{softplus}(\theta)$, where $\mathrm{softplus}(x)=\log(1+e^{x})>0$, so every increment is strictly positive. The grid size $Q{=}16$ balances shape resolution against update size. Quantile placement concentrates the knots where the data lie, so 16 knots resolve the KPI effects at the granularity relevant for risk prediction, while each constrained feature is transmitted as only 16 increment values per round, which is the source of the uplink savings reported in Section~\ref{sec:results}. The feature effect is the signed cumulative sum of these increments, so it is non-decreasing for $j\in\mathcal{M}^{+}$ and non-increasing for $j\in\mathcal{M}^{-}$ by construction, with few parameters and a direct guarantee on the grid. Unconstrained continuous features use small MLP subnetworks, and categorical features use linear terms. The hybrid design keeps guarantees where the direction is known and neural flexibility where it is not.

\subsection{Anchoring for Federated Identifiability}
Additive models are identifiable only up to constant offsets. Adding a constant $c_j$ to any $g_j$ and subtracting $\sum_j c_j$ from $\beta_0$ leaves every prediction in~\eqref{eq:nam} unchanged, so the data determine each shape function only up to its vertical level. In a single centralized model this ambiguity is harmless. Under federation it is not, for two reasons. First, the server averages client shapes pointwise, so arbitrary per-client levels leak into the aggregated shape and intercept and distort the global feature effect. Second, level-shifted client shapes are not comparable, so an auditor cannot tell whether clients disagree about a KPI effect or only about its offset. Each shape function is therefore anchored at a fixed public reference $a_j=0$:
\begin{equation}
\tilde{g}_{k,j}(x) = g_{k,j}(x) - g_{k,j}(a_j).
\label{eq:anchor}
\end{equation}
For standardized continuous features, $a_j=0$ is the common normalization reference, and for one-hot columns it is the absent level. The anchored constant $g_{k,j}(a_j)$ is removed from every client shape and folds into the global intercept $\beta_0$. The anchor is public, identical across clients, and needs no raw data exchange.

\subsection{Federated Objective and Training}
We use a sigmoid output with positive-class-weighted binary cross-entropy for the Bernoulli target~\cite{goodfellow2016deep}. The predictor $f_\theta(\mathbf{x})$ produces a logit, and
\begin{equation}
p_\theta(y=1\mid\mathbf{x}) = \sigma(f_\theta(\mathbf{x})),
\label{eq:sigmoid}
\end{equation}
where $\sigma(\cdot)$ is the logistic function and $\theta$ collects all trainable parameters. Following FedAvg~\cite{mcmahan2017fedavg}, the federated objective is
\begin{equation}
\min_{\theta}\ \sum_{k=1}^{K}\frac{N_k}{N}\,\mathcal{L}_k(\theta),
\qquad N=\sum_{k=1}^{K}N_k,
\label{eq:fedobj}
\end{equation}
with the local objective
\begin{equation}
\mathcal{L}_k(\theta)=\frac{1}{N_k}\sum_{n=1}^{N_k}
\ell_k\!\left(y_{k,n},p_\theta(y=1\mid\mathbf{x}_{k,n})\right),
\label{eq:localloss}
\end{equation}
and the weighted loss
\begin{equation}
\ell_k(y,p)=-\alpha_k y\log p-(1-y)\log(1-p).
\label{eq:weighted_bce}
\end{equation}
Here $\alpha_k=N_{k,0}^{\mathrm{tr}}/N_{k,1}^{\mathrm{tr}}$ is the positive-class weight from client $k$'s training split, where $N_{k,0}^{\mathrm{tr}}$ and $N_{k,1}^{\mathrm{tr}}$ are the negative and positive training counts. In implementation, the loss is applied to logits in the standard numerically stable form.

Training proceeds in communication rounds. At round $t$, the server broadcasts the current global NAM parameters to participating clients. Each client trains locally for $E{=}3$ epochs on its private data under the selected monotonicity constraints, anchors its feature functions using~\eqref{eq:anchor}, and sends its update to the server. The server then aggregates each transmitted feature block by sample-weighted averaging:
\begin{equation}
\psi_j^{(t+1)} = \sum_{k=1}^{K}\frac{N_k}{N}\,\psi_{k,j}^{(t+1)},
\label{eq:fedavg_step}
\end{equation}
where $\psi_j$ denotes the transmitted parameter block for feature $j$. For constrained features, $\psi_j$ is the signed spline-increment vector. For unconstrained subnetworks and the intercept, $\psi_j$ denotes the corresponding trainable parameter vector.

\begin{proposition}[Monotonicity preservation under spline-increment FedAvg] \label{prop:mono} Assume all clients use the same feature scaling, spline knots, and signed-increment parameterization for each constrained feature. If FedAvg~\eqref{eq:fedavg_step} is applied to the signed spline-increment vectors and each client satisfies its sign constraint, then the aggregated feature function preserves the same monotonic direction on the shared grid. \end{proposition}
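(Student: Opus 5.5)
The argument reduces to the fact that the nonnegative orthant is a convex cone, and that on a fixed grid the spline is a linear function of its increments. The plan is to write each client's constrained shape function explicitly in terms of its signed increments on the shared grid, observe that FedAvg on those increments is a convex combination, and then check that convex combinations keep the sign constraint.

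\emph{Step 1: represent each client's feature on the shared knots.} Fix a constrained feature $j$ with knots $\kappa_1<\dots<\kappa_Q$, common to all clients by assumption. Client $k$ has increments $\delta_{k,j,q}=\mathrm{softplus}(\theta_{k,j,q})>0$ and sign $s_j=+1$ for $j\in\mathcal{M}^{+}$, $s_j=-1$ for $j\in\mathcal{M}^{-}$. The transmitted block is the signed vector $\psi_{k,j}=s_j\,\delta_{k,j}$. The knot values are then $g_{k,j}(\kappa_q)=c_{k,j}+\sum_{r\le q}\psi_{k,j,r}$, and the spline linearly interpolates between knots. Consecutive differences are $g_{k,j}(\kappa_{q})-g_{k,j}(\kappa_{q-1})=\psi_{k,j,q}$, so the sign constraint is exactly $s_j\psi_{k,j,q}\ge 0$ for all $q$. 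Anchoring by \eqref{eq:anchor} only changes the constant $c_{k,j}$, which folds into $\beta_0$, so the differences are unaffected.

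\emph{Step 2: apply the aggregation.} By \eqref{eq:fedavg_step}, $\psi^{(t+1)}_{j,q}=\sum_k w_k\psi_{k,j,q}$ with $w_k=N_k/N\ge0$ and $\sum_k w_k=1$. Then $s_j\psi^{(t+1)}_{j,q}=\sum_k w_k\, s_j\psi_{k,j,q}\ge 0$, since it is a nonnegative combination of nonnegative numbers. It is in fact strictly positive, because each term is positive and at least one weight is positive. Because the spline with the shared knots is a linear map of its increment vector (plus a constant), the aggregated feature function is the spline built from $\psi^{(t+1)}_j$. Its consecutive knot differences therefore carry sign $s_j$, and on each linear segment between knots the derivative is $\psi^{(t+1)}_{j,q}/(\kappa_q-\kappa_{q-1})$, which has the same sign. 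The aggregated function is therefore monotone in direction $s_j$ on the grid and on its piecewise-linear interpolation.

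\emph{Step 3: the main obstacle.} The only delicate point is the bookkeeping: the aggregation must act on the signed increments $\psi$, not on the raw $\theta$. Averaging raw $\theta$ and then applying softplus would also preserve positivity, but it would give a different function, so I would state explicitly which object is averaged. I would also note where the shared-grid and shared-scaling assumptions are used: without them, averaging increment vectors would not correspond to averaging functions over the same intervals. With these clarified, the proof is the two-line convexity argument above.
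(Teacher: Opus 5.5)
Your proposal is correct and follows the same argument as the paper: FedAvg forms a nonnegative (convex) combination of same-signed spline increments, so each aggregated increment keeps the required sign and the spline stays monotone on the shared grid. Your extra bookkeeping just makes explicit what the paper's one-line proof leaves implicit: the spline is affine in its increments, anchoring only shifts a constant, and the average is taken over the signed increments rather than the raw $\theta$.
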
 \begin{proof} A weighted average of non-negative increments is non-negative, and likewise for non-positive, so the aggregated spline preserves the required direction on the shared grid. \end{proof}

\begin{figure*}[t]
\centering
\definecolor{cA}{HTML}{2B4C7E}
\definecolor{cAbg}{HTML}{EAF0F7}
\definecolor{cB}{HTML}{B5651D}
\definecolor{cBbg}{HTML}{FBF0E2}
\definecolor{cC}{HTML}{1F7A6D}
\definecolor{cCbg}{HTML}{E6F2EF}
\definecolor{cAcc}{HTML}{C0532B}
\definecolor{cInk}{HTML}{2F3338}

\resizebox{0.72\textwidth}{!}{%
\begin{tikzpicture}[
  font=\sffamily,>=Latex,
  ptitle/.style={font=\bfseries\normalsize,anchor=north west},
  card/.style={rounded corners=3pt,draw,line width=0.7pt,align=center,inner sep=4pt,font=\scriptsize,text=cInk},
  chip/.style={rounded corners=2pt,draw,line width=0.6pt,align=center,inner sep=3pt,font=\scriptsize,text=cInk},
  arr/.style={-{Latex[length=2.2mm,width=1.7mm]},line width=0.9pt},
  darr/.style={{Latex[length=2.2mm,width=1.7mm]}-{Latex[length=2.2mm,width=1.7mm]},line width=0.9pt},
  lab/.style={font=\normalsize,fill=white,inner sep=1.5pt,align=center},
  ilab/.style={font=\scriptsize\itshape,fill=white,inner sep=1.5pt,align=center,text=cAcc}
]

\draw[rounded corners=5pt,line width=0.9pt,draw=cA,fill=cAbg] (0,0.3) rectangle (4.7,6.0);
\draw[rounded corners=5pt,line width=0.9pt,draw=cB,fill=cBbg] (6.0,0.3) rectangle (11.8,6.0);
\draw[rounded corners=5pt,line width=0.9pt,draw=cC,fill=cCbg] (13.2,0.3) rectangle (17.7,6.0);

\node[ptitle,text=cA] at (0.12,5.9) {SMO / Non-RT RIC};
\node[ptitle,text=cB] at (6.12,5.9) {Distributed clients (gNBs)};
\node[ptitle,text=cC] at (13.32,5.9) {Near-RT RIC};

\node[card,draw=cA,fill=white,minimum width=4.1cm,minimum height=1.55cm] (rapp) at (2.35,3.6)
{\textbf{\textcolor{cA}{rApp: federated training coordinator}}\\[2pt]
aggregate governed updates\\
preserve monotone constraints\\
$\Rightarrow$ global \modelname{}};

\node[chip,draw=cA,fill=cAbg] (theta) at (2.35,2.1)
{global model $\theta_g$};

\node[card,draw=cB,fill=white,minimum width=5.0cm,minimum height=0.9cm] (c1) at (8.9,4.75)
{\textbf{\textcolor{cB}{gNB 1}}\quad local KPIs $\to$ local \modelname{}};

\node[card,draw=cB,fill=white,minimum width=5.0cm,minimum height=0.9cm] (c2) at (8.9,3.6)
{\textbf{\textcolor{cB}{gNB 2}}\quad local KPIs $\to$ local \modelname{}};

\node[text=cB,font=\large\bfseries] at (8.9,3.1) {$\vdots$};

\node[card,draw=cB,fill=white,minimum width=5.0cm,minimum height=0.9cm] (cK) at (8.9,2.4)
{\textbf{\textcolor{cB}{gNB $K$}}\quad local KPIs $\to$ local \modelname{}};

\node[chip,draw=cB,fill=cBbg] at (8.9,0.85)
{non-IID radio and load};
\node[chip,draw=cB,fill=cBbg] at (8.9,1.5)
{raw KPIs remain local};
\node[card,draw=cC,fill=white,minimum width=3.9cm,minimum height=1.0cm] (xapp) at (15.45,4.35)
{\textbf{\textcolor{cC}{xApp: SLA-risk inference}}\\[1pt]
per-window, per-slice};

\node[card,draw=cC,fill=white,minimum width=3.9cm,minimum height=1.55cm] (out) at (15.45,2.15)
{\textbf{\textcolor{cC}{Outputs}}\\[2pt]
SLA-risk score\\
$+$ additive explanation};

\coordinate (smoE)   at (4.7,3.6);
\coordinate (cliW)   at (6.0,3.6);

\coordinate (smoTop) at (2.35,6.0);
\coordinate (nrtTop) at (15.45,6.0);

\coordinate (cliE)   at (11.8,2.9);
\coordinate (nrtW)   at (13.2,2.9);

\draw[darr,cAcc] (smoE) -- node[ilab,above=3pt]{logical FL\\updates/ $\theta_g$} (cliW);

\draw[arr,cA] (smoTop) -- (2.35,6.4) -- (15.45,6.4) -- (nrtTop);
\node[lab,text=cA] at (9.1,6.68) {$\theta_g$ + constraint manifest + policy config};

\draw[darr,cC] (nrtW) -- node[lab,text=cC,above=2pt]{E2} (cliE);

\draw[arr,cC] (xapp.south) -- (out.north);

\end{tikzpicture}}
\caption{The Non-RT RIC rApp governs federated training across gNB clients and outputs a governed model.}

\label{fig:oran_placement}
\end{figure*}

\subsection{O-RAN Placement}
Fig.~\ref{fig:oran_placement} shows the intended rApp-to-xApp deployment path. The Non-RT RIC hosts \modelname{} as an rApp for federated training and model governance. The rApp aggregates governed client updates and outputs a model package containing the global parameters $\theta_g$, the monotonicity-constraint manifest, and inference policy configuration. The Near-RT RIC hosts the deployed inference function as an xApp. The xApp does not receive raw training KPIs or federated updates. It uses the deployed model package with live KPMs to compute per-slice SLA-risk scores and additive explanations.

\section{Experimental Setup}
\label{sec:experiment}

\subsection{ColO-RAN Dataset}
We use the ColO-RAN dataset~\cite{coloran2022}, collected on the Colosseum O-RAN testbed in the Rome static-medium scenario. The dataset contains measurements from seven base stations and three slices: enhanced Mobile Broadband (eMBB, slice~0), Machine-Type Communications (MTC, slice~1), and Ultra-Reliable Low-Latency Communications (URLLC, slice~2). Each base station uses a 10\,MHz channel with 50 physical resource blocks. Three scheduling policies are available: round-robin (RR), waterfilling (WF), and proportionally fair (PF). For each scheduler, the dataset sweeps 28 resource-block-group allocations across the slices and repeats every configuration over five experiments. Per-slice key performance metrics (KPMs) are reported every 250\,ms (4\,Hz). This structure provides a natural federated setting: each base station acts as one client and observes a different local radio and traffic distribution, with the three slices standing in for tenant services whose records must not be pooled raw.

\subsection{Features, Constraints, and Labels}
Each client builds per-slice feature vectors over fixed, non-overlapping windows of $W{=}10$ samples (2.5\,s). Within a window, each user's KPI stream is first reduced to its temporal mean. The per-user values are then pooled across the slice's active users: volume and throughput KPIs are summed, and quality KPIs are averaged. The requested and granted PRB KPIs are cumulative counts over the 250\,ms reporting interval rather than instantaneous occupancy, so their window totals exceed the 50-PRB channel capacity by construction. Table~\ref{tab:mono_design} assigns the resulting ColO-RAN KPIs to conservative, aggressive, and unconstrained groups. The conservative set contains KPIs with unambiguous conditional direction and is constrained by default. The aggressive set contains KPIs whose direction is plausible but contestable; these features are left free by default and constrained only in the ablation of Section~\ref{sec:results}.

The aggressive features are left unconstrained by default because their effect on risk is less clear. Allocation KPIs, such as \texttt{slice\_prb} and \texttt{sum\_granted\_prbs}, are controlled by the dataset sweep and may saturate after demand is met. The current delivered rate is useful for prediction, but it correlates strongly with the next-window label and may introduce leakage. The feature \texttt{num\_ues} is also sensitive because the service profiles are defined on a per-user basis. The conservative set also includes user-level radio KPIs joined from the per-user logs. Radio-quality KPIs are constrained as non-increasing in risk, while block-error KPIs are constrained as non-decreasing in risk.

The prediction target is the next-window per-slice SLA-violation status. We use a one-window horizon ($H{=}1$), so each feature window predicts the immediately following 2.5\,s window. The labels are defined using service-profile proxies derived from the ColO-RAN measurements, standing in for per-tenant SLA contracts. The eMBB, MTC, and URLLC slices play the role of three tenant services with distinct assurance targets. An eMBB slice is positive when the per-user delivered rate is below 2\,Mbps, which is half of its 4\,Mbps offered load. An MTC slice is positive when the per-user delivered rate is below 0.015\,Mbps, which is half of its 0.03\,Mbps Poisson source rate. A URLLC slice is positive when the downlink buffer exceeds the backlog implied by a 1\,ms latency target, computed using Little's law, or when the downlink error rate exceeds 5\%. In this dataset, the buffer threshold is about 28.8\,bytes. A $\pm20\%$ sensitivity check on the eMBB and MTC thresholds produced stable violation rates. Table~\ref{tab:violation_rates} reports the resulting positive rates for each split. Features come from the current window and labels from the next. No future sample enters the feature vector, and every label window strictly post-dates its feature window. Monotone spline features use the shared 16-knot quantile grid described in Section~\ref{sec:method}, placed at marginal feature quantiles and applied identically across clients.
\begin{table}[t]
\centering
\caption{Monotonicity design for ColO-RAN per-slice SLA-violation prediction.}
\label{tab:mono_design}
\small
\renewcommand{\arraystretch}{1.05}
\resizebox{\columnwidth}{!}{
\begin{tabular}{@{}p{0.42\linewidth}p{0.41\linewidth}p{0.26\linewidth}@{}}
\toprule
KPI  & Effect on risk & Monotonic constraint \\
\midrule
\multicolumn{3}{@{}l}{\emph{Conservative set (constrained by default)}}\\
\texttt{dl\_buffer [bytes]} & Backlog, higher worse & Non-decreasing \\
\texttt{sum\_requested\_prbs} & Demand, higher worse & Non-decreasing \\
\texttt{tx\_errors downlink (\%)} & Loss, higher worse & Non-decreasing \\
\texttt{dl\_cqi} & Channel, higher better & Non-increasing \\
\texttt{dl\_mcs} & Link rate, higher better & Non-increasing \\
\texttt{rsrp}, \texttt{dl\_snr} (UE join) & Radio quality, higher better & Non-increasing \\
\texttt{dl\_bler} (UE join) & Block error, higher worse & Non-decreasing \\
\midrule
\multicolumn{3}{@{}l}{\emph{Aggressive set (free by default; constrained in ablation)}}\\
\texttt{slice\_prb}, \texttt{sum\_granted\_prbs} & Allocation, plausibly better & Non-increasing \\
\texttt{tx\_brate downlink} & Delivered, leakage-prone & Non-increasing \\
\texttt{num\_ues} & Contention, label-coupled & Non-decreasing \\
\midrule
\texttt{slice\_id}, \texttt{scheduling\_policy} & Context-dependent & Unconstrained \\
\bottomrule
\end{tabular}}
\end{table}
\begin{table}[t]
\centering
\caption{Positive rate of the next-window SLA-violation proxy.}
\label{tab:violation_rates}
\small
\setlength{\tabcolsep}{3pt}
\renewcommand{\arraystretch}{1.03}
\begin{tabular}{@{}llccc@{}}
\toprule
Slice & Positive-label rule & Train & Val & Test \\
\midrule
eMBB  & Rate $<2$\,Mbps              & 0.574 & 0.571 & 0.562 \\
MTC   & Rate $<0.015$\,Mbps          & 0.105 & 0.108 & 0.106 \\
URLLC & Buffer $>28.8$\,B or error $>5\%$ & 0.514 & 0.515 & 0.514 \\
\bottomrule
\end{tabular}
\end{table}

\subsection{Baselines and Metrics}
We compare five models that separate prediction accuracy, federated training, interpretability, and rApp-side constraint governance. Local NAM trains one unconstrained additive model per client and averages performance over local test splits. Centralized NAM trains one unconstrained additive model on pooled data and serves as a non-private upper reference. FedAvg-MLP is a black-box federated baseline trained with standard FedAvg. FedNAM is an unconstrained federated additive model. \modelname{} is the proposed rApp-governed model, using monotone splines on the conservative KPI set and unconstrained subnetworks for the remaining continuous features. All models use the same train/validation/test split: experiments 1--3 for training, experiment 4 for validation, and experiment 5 for testing. Federated models use 30 rounds, $E{=}3$ local epochs, full client participation, batch size 8192, Adam optimization, and positive-class-weighted BCE. Uplink communication is counted as 32-bit client-to-server model updates over all 30 rounds. Class balance varies strongly across slices (10\% for MTC, 57\% for eMBB), so we report average precision (AP) and F1 alongside the area under the ROC curve (AUC). To evaluate rApp-side physical governance, we report monotonicity-violation rate over the constrained KPI set. For each constrained feature, we evaluate the learned shape on a shared grid and count adjacent differences that violate the expected direction; the average violation rate is the mean over constrained features. We also report constrained-shape consistency, defined as the mean pairwise Pearson correlation between per-client local shape functions over constrained features after local training. This measures whether clients agree on the form of each governed KPI effect before aggregation, not the single post-aggregation global shape. Finally, we report parameter count and uplink communication cost to assess whether the governed model remains compact enough for xApp deployment.

\section{Results and Analysis}
\label{sec:results}

All results evaluate the rApp-governed training role of \modelname{}. The Non-RT RIC rApp should produce a global model that remains accurate enough for SLA-risk prediction, physically consistent in its KPI-risk effects, auditable through additive explanations, and compact enough for deployment to a Near-RT RIC xApp. Results are means over three seeds on the full ColO-RAN matrix: three schedulers, 28 allocations, five experiments, and seven base stations.

\paragraph{Accuracy tradeoff of the rApp-produced model.}
Table~\ref{tab:predictive} reports the predictive performance of the federated models. The black-box FedAvg-MLP gives the highest AUC, 0.973. Unconstrained FedNAM reaches 0.943, while \modelname{} reaches 0.901. Thus, enforcing physical consistency costs 0.042 AUC relative to unconstrained FedNAM and 0.072 relative to the black-box MLP. AP and F1 follow the same order. This confirms that the model produced under rApp governance trades some predictive flexibility for the physical-validity guarantees evaluated next.

\begin{table}[t]
\centering
\caption{Predictive performance and model size of the federated models (mean$\pm$std over seeds).}
\label{tab:predictive}
\small
\setlength{\tabcolsep}{3pt}
\renewcommand{\arraystretch}{0.98}
\begin{tabular}{lccc}
\toprule
Metric & FedAvg-MLP & FedNAM (uncon.) & \modelname{} \\
\midrule
AUC & 0.973$\pm$0.0003 & 0.943$\pm$0.002 & 0.901$\pm$0.0001 \\
AP  & 0.960$\pm$0.0004 & 0.916$\pm$0.003 & 0.860$\pm$0.001 \\
F1  & 0.883$\pm$0.0004 & 0.829$\pm$0.002 & 0.777$\pm$0.001 \\
\midrule
Params & 12{,}801 & 7{,}315 & 2{,}563 \\
\bottomrule
\end{tabular}
\end{table}

\paragraph{rApp governance prevents physically invalid explanations.}
Table~\ref{tab:interp} is the central result. It shows why the rApp must govern the learned feature effects rather than only aggregate model updates. Monotonicity does not emerge from the data by itself. The local NAM violates the expected KPI-risk directions at an average rate of 0.461, and the centralized NAM violates them at 0.508. Pooling heterogeneous slice and scheduler regimes into one shape can average toward a physically wrong global direction, so centralized fitting does not reduce violations. Unconstrained federation worsens the problem further: FedNAM reaches an average violation rate of 0.652, because averaging divergent per-client shapes compounds the distortion. The worst cases are severe, including a full inversion of \texttt{dl\_cqi}, where better channel quality is associated with higher predicted risk. In contrast, \modelname{} removes these violations by construction. This result supports the main rApp contribution: the Non-RT RIC rApp does more than train a predictor; it preserves domain-valid KPI-risk behavior under federated aggregation. Fig.~\ref{fig:shape} illustrates this effect for \texttt{dl\_cqi}. The unconstrained federated model learns an inverted channel-quality effect, while \modelname{} enforces the expected non-increasing relationship between channel quality and SLA-risk.

\begin{table}[t]
\centering
\caption{Interpretability: AUC versus monotonicity violation. FedAvg-MLP has no additive shapes.}
\label{tab:interp}
\small
\resizebox{\columnwidth}{!}{%
\begin{tabular}{lccc}
\toprule
Model & AUC & Avg.\ mono.\ viol. & Worst-KPI viol. \\
\midrule
Local NAM & 0.931$\pm$0.001 & 0.461$\pm$0.010 & 0.995 (\texttt{dl\_bler}) \\
Centralized NAM & 0.957$\pm$0.004 & 0.508$\pm$0.045 & 1.000 (\texttt{dl\_cqi}) \\
FedAvg-MLP & 0.973$\pm$0.000 & N/A & N/A \\
FedNAM (uncon.) & 0.943$\pm$0.002 & 0.652$\pm$0.004 & 0.987 (\texttt{dl\_cqi}) \\
\modelname{} & 0.901$\pm$0.0001 & 0.000$\pm$0.000 & 0.000 (\texttt{--}) \\
\bottomrule
\end{tabular}}
\end{table}

\paragraph{Conservative constraint governance is the right operating point.}
The rApp should not constrain every KPI. A sweep over constraint sets confirms this design choice. With no constraints, FedNAM reaches 0.943 AUC and 0.916 AP but has 0.652 average monotonicity violation. Adding only the conservative constraint set removes all violations and raises constrained-shape consistency from 0.712 to 1.000, at a measured cost of 0.042 AUC. Adding the aggressive constraints gives no additional interpretability benefit, because violations are already zero, but it further reduces AUC from 0.901 to 0.836 and AP from 0.860 to 0.771. Therefore, the rApp should govern only the KPI directions with unambiguous wireless meaning and leave contestable features free.

\paragraph{xApp deployment feasibility.}
We assess xApp deployability through model properties rather than a runtime xApp, which we leave to future work. Specifically, we test leakage robustness, scheduler transfer, and communication cost. Because the prediction horizon is one 2.5\,s window, the deployed xApp produces a short-horizon forecast that informs slower policy, so its cadence does not require sub-second near-RT control. Removing the current delivered rate has only a small effect: AUC falls from 0.943 to 0.927 for FedNAM and from 0.901 to 0.892 for \modelname{}. The result is therefore not driven only by throughput autocorrelation. In a leave-one-scheduler-out check, \modelname{} reaches a mean AUC of 0.887, compared with 0.927 for unconstrained FedNAM and 0.965 for FedAvg-MLP, showing that governed KPI-risk relationships transfer to an unseen scheduler with the same accuracy cost observed in the main split.

Finally, the rApp-side update cost is lower for \modelname{} by construction. Uplink traffic is the model update size times the number of rounds, and the constrained representation shrinks the update itself: each constrained feature is transmitted as a 16-value spline-increment vector instead of the several hundred weights of an MLP subnetwork. This reduces the parameter count from 7{,}315 (FedNAM) and 12{,}801 (FedAvg-MLP) to 2{,}563 (Table~\ref{tab:predictive}), and the total client-to-server uplink over 30 rounds falls in proportion, from 6.14\,MB and 10.75\,MB to 2.15\,MB. The rApp therefore produces a physically governed and auditable model with lower training-update overhead before xApp deployment.
\begin{figure}[t]
\centering
\includegraphics[width=0.8\columnwidth]{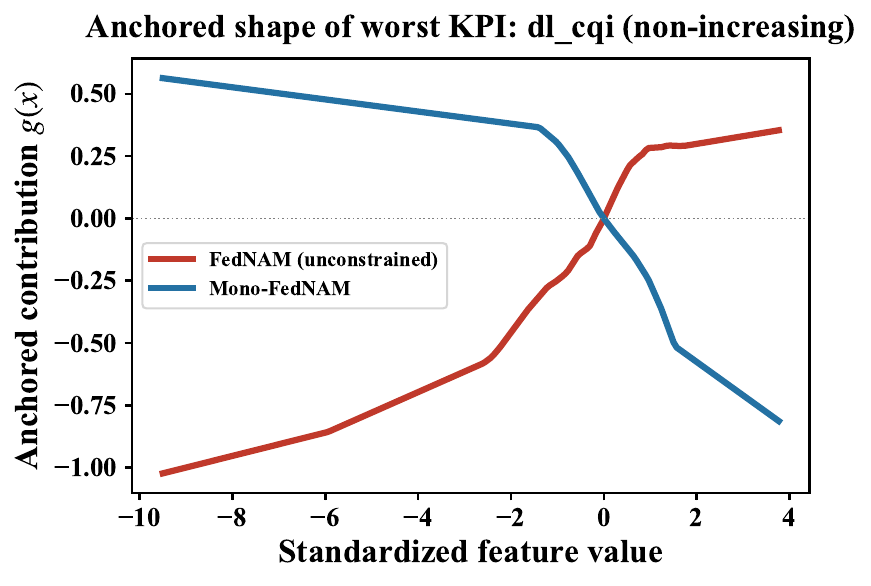}
\caption{Unconstrained FedNAM learns an inverted channel-quality effect; Monotone FedNAM enforces the physically expected non-increasing shape.}
\label{fig:shape}
\end{figure}

\section{Discussion and Conclusion}
\label{sec:discussion}
\label{sec:conclusion}

We presented \modelname{}, an rApp-governed federated additive model for per-slice SLA-risk prediction in O-RAN, whose global model deploys to a Near-RT RIC xApp for online inference with additive explanations. Three findings stand out. First, interpretability alone does not ensure physical validity: unconstrained NAMs violate expected KPI-risk directions in every tested regime, and non-IID federation worsens the violations. Second, governing the conservative constraint set removes all violations, aligns client shapes, and lowers uplink cost, at about 0.04 AUC against unconstrained FedNAM and 0.07 against the black-box MLP. Third, constraining the contestable KPIs as well reduces accuracy without interpretability gain, so conservative governance is the right operating point. The model also transfers to an unseen scheduling policy. The study covers one emulated testbed with static users, a proxy URLLC label, and seven clients in a single operator domain, so the multi-tenant confidentiality setting is motivated rather than instantiated. Future work will test unseen resource allocations, dynamic mobility, larger federations, and governance across operator trust domains, where no single SMO exists.

\bibliographystyle{IEEEtran}
\bibliography{references}

@inproceedings{mcmahan2017fedavg,
 title={Communication-efficient learning of deep networks from decentralized data},
  author={McMahan, Brendan and Moore, Eider and Ramage, Daniel and Hampson, Seth and y Arcas, Blaise Aguera},
  booktitle={Artificial intelligence and statistics},
  pages={1273--1282},
  year={2017},
  organization={Pmlr}
}

@article{hastie1986gam,
title={Generalized additive models},
  author={Hastie, Trevor J},
  journal={Statistical models in S},
  pages={249--307},
  year={2017},
  publisher={Routledge}
}

@inproceedings{agarwal2021nam,
author = {Agarwal, Rishabh and Melnick, Levi and Frosst, Nicholas and Zhang, Xuezhou and Lengerich, Ben and Caruana, Rich and Hinton, Geoffrey E.},
title = {Neural additive models: interpretable machine learning with neural nets},
year = {2021},
isbn = {9781713845393},
publisher = {Curran Associates Inc.},
address = {Red Hook, NY, USA},
booktitle = {Proceedings of the 35th International Conference on Neural Information Processing Systems},
articleno = {359},
numpages = {13},
series = {NIPS '21}
}

@article{nanda2025fednam,
 title={FedNAMs: Performing interpretability analysis in federated learning context},
  author={Nanda, Amitash and Balija, Sree Bhargavi and Sahoo, Debashis},
  journal={arXiv preprint arXiv:2506.17466},
  year={2025}
}

@book{goodfellow2016deep,
 title={Deep learning},
  author={Goodfellow, Ian and Bengio, Yoshua and Courville, Aaron and Bengio, Yoshua},
  volume={1},
  number={2},
  year={2016},
  publisher={MIT press Cambridge}
}

@article{polese2023understanding,
  author={Polese, Michele and Bonati, Leonardo and D'Oro, Salvatore and Basagni, Stefano and Melodia, Tommaso},
  title={Understanding {O-RAN}: Architecture, Interfaces, Algorithms, Security, and Research Challenges},
  journal={IEEE Communications Surveys \& Tutorials}, volume={25}, number={2}, pages={1376--1411}, year={2023}}

@article{coloran2022,
  author={Polese, Michele and Bonati, Leonardo and D'Oro, Salvatore and Basagni, Stefano and Melodia, Tommaso},
  title={{ColO-RAN}: Developing Machine Learning-Based {xApps} for Open {RAN} Closed-Loop Control on Programmable Experimental Platforms},
  journal={IEEE Transactions on Mobile Computing}, volume={22}, number={10}, pages={5787--5800}, year={2023}}

@inproceedings{foukas2023ran,
  author={Foukas, Xenofon and Radunovic, Bozidar and Balkwill, Matthew and Lai, Zhihua},
  title={Taking 5{G} {RAN} Analytics and Control to a New Level},
  booktitle={Proc. ACM MobiCom}, year={2023}}

@inproceedings{raftopoulos2024latency,
  author={Raftopoulos, Raoul and D'Oro, Salvatore and Melodia, Tommaso and Schembra, Giovanni},
  title={{DRL}-Based Latency-Aware Network Slicing in {O-RAN} with Time-Varying {SLAs}},
  booktitle={Proc. IEEE Int. Conf. Computing, Networking and Communications (ICNC)}, year={2024}}

@article{kairouz2021advances,
  author={Kairouz, Peter and McMahan, H. Brendan and others},
  title={Advances and Open Problems in Federated Learning},
  journal={Foundations and Trends in Machine Learning}, volume={14}, number={1--2}, pages={1--210}, year={2021}}

@inproceedings{zhang2022federated,
  author={Zhang, Han and Zhou, Hao and Erol-Kantarci, Melike},
  title={Federated Deep Reinforcement Learning for Resource Allocation in {O-RAN} Slicing},
  booktitle={Proc. IEEE Global Communications Conference (GLOBECOM)}, year={2022}}

@inproceedings{erdol2022federated,
  author={Erdol, Hakan and Wang, Xiaoyang and Li, Pengwei and Thomas, Jonathan D. and Piechocki, Robert and Oikonomou, George and Inacio, Rui and Ahmad, Abdelrahim and Briggs, Keith and Kapoor, Shipra},
  title={Federated Meta-Learning for Traffic Steering in {O-RAN}},
  booktitle={Proc. IEEE Vehicular Technology Conference (VTC2022-Fall)}, year={2022}}

@inproceedings{attanayaka2023p2p,
  author={Attanayaka, Dinaj and Porambage, Pawani and Liyanage, Madhusanka and Ylianttila, Mika},
  title={Peer-to-Peer Federated Learning Based Anomaly Detection for Open Radio Access Networks},
  booktitle={Proc. IEEE Int. Conf. Communications (ICC)}, year={2023}}

@article{brik2024xai,
  author={Brik, Bouziane and Chergui, Hatim and Zanzi, Lanfranco and Devoti, Francesco and Ksentini, Adlen and Siddiqui, Muhammad Shuaib and Costa-P\'erez, Xavier and Verikoukis, Christos},
  title={Explainable {AI} in 6{G} {O-RAN}: A Tutorial and Survey on Architecture, Use Cases, Challenges, and Future Research},
  journal={IEEE Communications Surveys \& Tutorials}, year={2024}}

@article{khan2024explainable,
  author={Khan, Nasir and Coleri, Sinem and Abdallah, Asmaa and Celik, Abdulkadir and Eltawil, Ahmed M.},
  title={Explainable and Robust Artificial Intelligence for Trustworthy Resource Management in 6{G} Networks},
  journal={IEEE Communications Magazine}, volume={62}, number={4}, pages={50--56}, year={2024}}

@article{fiandrino2023explora,
  author={Fiandrino, Claudio and Bonati, Leonardo and D'Oro, Salvatore and Polese, Michele and Melodia, Tommaso and Widmer, Joerg},
  title={{EXPLORA}: {AI/ML} Explainability for the Open {RAN}},
  journal={Proceedings of the ACM on Networking (CoNEXT)}, volume={1}, pages={1--26}, year={2023}}

@inproceedings{rezazadeh2023explanation,
  author={Rezazadeh, Farhad and Chergui, Hatim and Mangues-Bafalluy, Josep},
  title={Explanation-Guided Deep Reinforcement Learning for Trustworthy 6{G} {RAN} Slicing},
  booktitle={Proc. IEEE Int. Conf. Communications Workshops (ICC Workshops)}, year={2023}}

@inproceedings{sill1997monotonic,
  author={Sill, Joseph},
  title={Monotonic Networks},
  booktitle={Advances in Neural Information Processing Systems (NeurIPS)}, year={1997}}

@inproceedings{liu2020certified,
  author={Liu, Xingchao and Han, Xing and Zhang, Na and Liu, Qiang},
  title={Certified Monotonic Neural Networks},
  booktitle={Advances in Neural Information Processing Systems (NeurIPS)}, year={2020}}

@inproceedings{runje2023monotonic,
  title={Constrained monotonic neural networks},
  author={Runje, Davor and Shankaranarayana, Sharath M},
  booktitle={International Conference on Machine Learning},
  pages={29338--29353},
  year={2023},
  organization={PMLR},
}

@article{samdanis2016slicebroker,
  author={Samdanis, Konstantinos and Costa-Perez, Xavier and Sciancalepore, Vincenzo},
  journal={IEEE Communications Magazine}, 
  title={From network sharing to multi-tenancy: The 5G network slice broker}, 
  year={2016},
  volume={54},
  number={7},
  pages={32-39},
  doi={10.1109/MCOM.2016.7514161}}

@techreport{gsma2025ng116,
  author      = {{GSMA}},
  title       = {Generic Network Slice Template},
  number      = {NG.116, Version 10.0},
  institution = {GSM Association},
  type        = {Official Document},
  year        = {2024},
  url         = {https://www.gsma.com/newsroom/gsma_resources/ng-116-generic-network-slice-template-v8-0/}
}

\end{document}